\newcommand{\printMessage}[3]{
\ifthenelse{\equal{#3}{left}}{
  \draw[message] ($(#1)!.30!(#2) !.2!90:(#1)$) -- ($(#1)!.70!(#2) !-.2!90:(#2)$) node[midway] (m) {};
}
{
  \draw[message] ($(#1)!.30!(#2) !.2!-90:(#1)$) -- ($(#1)!.70!(#2) !-.2!-90:(#2)$) node[midway] (m) {};
  
}
}
\tikzstyle{nd}=[circle,draw=black!80,fill=geel!15,minimum size=.65cm,line width=.2pt]
\tikzstyle{observed}=[nd,fill=groen!100]
\tikzstyle{target}=[nd,fill=roze!100]
\tikzstyle{unobserved}=[nd,fill=geel!15]
\tikzstyle{terminal}=[nd,fill=geel!15]
\tikzstyle{ed}=[draw=geel!60!black,line width=.8pt, postaction={decorate}, decoration={markings,mark=at position 1.0 with {\arrow[draw=geel!60!black,line width=.8pt]{>}}}]
\tikzstyle{backbone}=[draw=rood,line width=1.8pt, postaction={decorate}, decoration={markings,mark=at position 1.0 with {\arrow[draw=rood,line width=.8pt]{>}}}]
\tikzstyle{message}=[draw=grijs,line width=.5pt, postaction={decorate}, decoration={markings,mark=at position 1.0 with {\arrow[draw=grijs,line width=.8pt]{>}}}]
\newtheorem{theorem}{Theorem}
\newtheorem{proposition}[theorem]{Proposition}
\newtheorem{lemma}[theorem]{Lemma}
\newcommand*{\D}{\mathcal{D}}
\newcommand*{\M}{\mathcal{M}}
\DeclarePairedDelimiter{\set}{\{}{\}}
\newcommand*{\cset}[3][]{\set[#1]{#2:#3}}
\newcommand*{\nats}{\mathbb{N}}
\newcommand*{\natex}[2][]{\mathcal{E}_{#1}({#2})}
\newcommand*{\A}{\mathcal{A}}
\newcommand{\reals}{\mathbb{R}}
\newcommand{\nodes}{G}
\newcommand{\nonterminals}{\nodes^\lozenge}
\newcommand{\roots}{\nodes^\square}
\newcommand{\parents}[1]{P(#1)}
\newcommand{\mother}[1]{m(#1)}
\newcommand{\children}[2][]{C_{#1}(#2)}
\newcommand{\descendants}[1]{D(#1)}
\newcommand{\precedes}{\sqsubseteq}
\newcommand{\sprecedes}{\sqsubset}
\newcommand{\nonparnondes}[1]{N(#1)}
\newcommand{\var}[1]{X_{#1}}
\newcommand{\values}[1]{\mathcal{X}_{#1}}
\newcommand{\gamblesX}{\mathcal{G}(\mathcal{X})}
\newcommand{\gambles}[1]{\mathcal{G}(\values{#1})}
\newcommand{\gamblesposX}{\gamblesX_{>0}}
\newcommand{\gamblesnegX}{\gamblesX_{\leq 0}}
\newcommand{\xval}[1]{x_{#1}}
\newcommand{\zval}[1]{z_{#1}}
\newcommand{\sing}[1]{\{#1\}}
\newcommand{\xsing}[1]{\sing{\xval{#1}}}
\newcommand{\ind}[1]{\mathbb{I}_{#1}}
\DeclareMathOperator{\posi}{posi}
\DeclareMathOperator{\marg}{marg}
\title{Credal nets under epistemic irrelevance}
\author{Jasper De Bock}
\author{Gert de Cooman}
\email{\{jasper.debock,gert.decooman\}@UGent.be}
\address{Ghent University, SYSTeMS Research Group, Technologiepark 914,  9052 Zwijnaarde, Belgium.}
\begin{document}

\begin{abstract}
We present a new approach to credal nets, which are graphical models
that generalise Bayesian nets to imprecise probability. Instead of
applying the commonly used notion of strong independence, we replace
it by the weaker notion of epistemic irrelevance. We show how
assessments of epistemic irrelevance allow us to construct a global
model out of given local uncertainty models and mention some useful
properties. The main results and proofs are presented using the
language of sets of
desirable gambles, which provides a very general and expressive way of
representing imprecise probability models. 
%Less general versions of our results are derived from this, both in
%the language of coherent lower previsions and credal sets.
\end{abstract}

%\keywords{credal net, epistemic irrelevance}

\maketitle

\section{Introduction}
\label{sec:introduction}

This paper is under construction. At the current stage, it only aims to
present some essential ideas and theorems. We intend to extend this
preliminary work to a full size paper in the near future.

\section{Sets of desirable gambles}
\label{sec:SDGs}
Consider a variable $X$ taking values in some non-empty and finite set
$\values{}$. Knowledge about the possible values this variable may assume
can be modelled in various ways: probability mass functions, credal
sets and coherent lower previsions are only a few of the many
options. We choose te use a different approach, being a set of
desirable gambles. We will model a
subject's beliefs regarding the value of a variable $X$ by
means of his behaviour: which gambles
(or bets) on the unknown value of $X$ would our subject be
inclined to participate in?

Although they are not as well known as
other (imprecise) probability models, sets of desirable gambles have a series of advantages. To begin with, sets of
desirable gambles are more expressive then both credal sets and lower previsions. For example, sets
of desirable gambles are easily able to model such things as conditioning on
events with probability zero, which is something other imprecise
probability models cannot do. Secondly, sets of desirable gambles have
the advantage of being operational, meaning that there is
a practical way of constructing a model that represents the subject's beliefs. In the case of sets of desirable gambles this can be done by
offering the subject certain gambles and asking him whether or not he
wants to participate. And finally, it tends to be much easier to construct proofs in the
language of coherent sets of desirable gambles then it is to do so in
other languages. We will give a brief survey of the basics of sets of desirable gambles
and refer to Refs.~\cite{cooman2010,couso2011,walley2000} for more
details and further discussion.

\subsection{Desirable gambles.}
A gamble $f$ is a real-valued map on $\values{}$ which is interpreted as
an uncertain reward. If the value of the variable $X$ turns out to be $x$, the (possibly negative) reward is $f(x)$. 
A non-zero gamble is called desirable if we accept the transaction in which
\begin{inparaenum}[(i)]
  \item the actual value $x$ of the variable is determined, and
  \item we receive the reward $f(x)$.
\end{inparaenum}
The zero gamble is not considered to be desirable, mainly because we want desirability to represent a strict preference to the zero gamble.%\footnote{For clarification on the confusing nomenclature in the literature, see \cite[footnote~2]{DeCooman-Quaeghebeur-2009-ISIPTA09}.}

We will model a subject's beliefs regarding the possible values
$\values{}$ that a variable $X$ can assume by means of a set $\D$ of desirable gambles, which will be a subset of the set $\gamblesX$ of all gambles on $\values{}$. 
For any two gambles $f$ and $g$ in $\gamblesX$, we say that $f\geq g$
if ${f(x)\geq g(x)}$ for all~$x$ in~$\values{}$ and $f>g$ if $f\geq g$
and $f\neq g$. We use~$\gamblesposX$ to denote the set of all gambles
$f\in\gamblesX$ for wich $f>0$ and $\gamblesnegX$ to denote the set of
all gambles $f\in\gamblesX$ for which $f\leq 0$. As a special kind of
gambles we consider the \emph{indicator functions}. For every event $A\subseteq\values{}$,
the gamble $\ind{A}$ is called the indicator function of
$A$. It is equal to $1$ if the event occurs (the variable $X$ assumes
a value in $A$) and zero otherwise. 
\subsection{Coherence.}

In order to represent a rational subject's beliefs about the
values a variable can assume, a set $\D\subseteq\gamblesX$ of desirable gambles should satisfy some rationality requirements. If these requirements are met, we call the set $\D$ \emph{coherent}. \\[2ex]
%\begin{definition}[Coherence]
%  A set of desirable gambles~$\D\subseteq\gambles$ is called
%  \emph{coherent} if it satisfies the following requirements, 
For all $f$, $f_1$, $f_2\in\gamblesX$ and all real
  $\lambda>0$:
  \begin{enumerate}[label=\upshape D\arabic*.,ref=\upshape D\arabic*,leftmargin=*]
  \item\label{item:Dnonzero} if $f\leq 0$ then $f\notin\D$
  \item\label{item:Dapg} if $f>0$ then $f\in\D$
  \item\label{item:Dscl} if $f\in\D$ then $\lambda f\in\D$\hfill [scaling]
  \item\label{item:Dcmb} if $f_1,f_2\in\D$ then $f_1+f_2\in\D$\hfill [combination]
    % if the number of items here changes, appropriately modify \addtocounter argument below
  \end{enumerate}
%\end{definition}
Requirements~\ref{item:Dscl} and~\ref{item:Dcmb} make~$\D$ a convex cone: $\posi(\D)=\D$, where we have used the positive hull operator $\posi$ which generates the set of finite strictly positive linear combinations of elements of its argument set:
\begin{equation}\label{eq:posi}
  \posi(\D)\coloneqq\cset[\bigg]{\sum_{k=1}^n\lambda_kf_k}{f_k\in\D,\lambda_k\in\reals^+_0,n\in\nats_0}.
\end{equation}
Here $\reals^+_0$ is the set of all strictly positive real numbers, and
$\nats_0$ the set of all natural numbers (positive integers).

\subsection{Natural extension.}

In practice, a set of desirable gambles will usually be elicited by
presenting an expert a number of gambles and asking him
wether or not he finds them desirable. This results in a (finite) assessment
of desirable gambles $\A\subseteq\gamblesX$ and the question raises
whether this can be extended to a coherent set.
It is shown in Ref.~\cite{walley1991} that if the assessment $\A$ can be extended to a coherent set of desirable gambles,
the smallest (most conservative) such coherent set is given by $\natex{\A}{}\coloneqq \posi(\A\cup\gamblesposX)$
% \begin{equation*}
% \natex{\A}{}\coloneqq \posi(\A\cup\gamblesposX)
% \end{equation*}
and we then call $\natex{\A}{}$ the natural extension of $\A$.

\subsection{Maximal sets of desirable gambles.}\label{sec:maximal}

A coherent set $\D$ of desirable gambles on $\values{}$ is called \emph{maximal} if
it is not strictly included in any other coherent set of desirable gambles on~$\values{}$. In other words, if adding any gamble $f$ to $\D$ makes sure
we can no longer extend the set $\D\cup\{f\}$ to a set that is still
coherent. We will denote maximal sets of desirable gambles as $\M$ instead
of using the general notation $\D$. 

These maximal sets of desirable gambles have a number of useful
properties. 
For example, a coherent set $\D$ of desirable gambles on $\values{}$ is allways the
intersection of all the maximal coherent sets $\M$ of desirable
gambles on $\values{}$ that include $\D$; see Ref.~\cite{walley1991}. In
other words, $f\in\D$ if and only if $f\in\M$ for every $\M\supseteq\D$. As a consequence, if a gamble
$f\in\gambles{}$ is not an element of $\D$, there is at least one
maximal set $\M\supseteq\D$ for which $f\notin\M$.
Another useful property that holds for every maximal
set $\M$ is that for all gambles $f\neq 0$ in $\gamblesX$,
either $f$ or $-f$ is an element of $\M$; see
Ref.~\cite{couso2011}.

% \subsection{Marginalisation and conditioning.} Now consider two
% variables $X$ and $Y$, taking values in the respective finite
% sets $\values$ and $\Y$. The joint variable $(X,Y)$ takes values in
% the product space $\values{}\times\Y$ and the set of all gambles on this
% space is denoted as $\mathcal{G}(\values{}\times\Y)$. A subject's
% beliefs about the value of the joint variable $(X,Y)$ can be
% represented by means of a
% coherent joint set $\D\subseteq\mathcal{G}(\values{}\times\Y)$ of desirable gambles on $\values{}\times\Y$.

\section{Credal nets under epistemic irrelevance}
\label{sec:forircredalnets}

\subsection{Directed acyclic graphs.}
A directed acyclic graph (DAG) is a graphical model that is well known
for its use in bayesian networks. It consists of a finite set of
nodes (vertices), which are joint together into a network by a set of
directed edges, each edge connecting one node with another. Since this
directed graph is assumed to be acyclic, it is not possible to follow a sequence of edges from node to node and end up back
at the same node you started from.

We will call $\nodes$ the set of nodes $s$ associated with a given
DAG. For two nodes $s$ and $t$, if there is a directed edge from $s$
to $t$, we say that $s$ is a \emph{parent} of $t$ and $t$ is a \emph{child} of
$s$. Note that a single node can have multiple parents and multiple children.
For any node $s$, its set of parents is denoted by
$\parents{s}$ and its set of children by $\children{s}$.
If a node 
$s$ 
has no parents, we use the convention
$\mother{s}=\emptyset$ and we call it a \emph{root node}. The set of
all root nodes is
denoted as $\roots\coloneqq\cset{s\in\nodes}{\parents{s}=\emptyset}$.
If $\children{s}=\emptyset$, then we call $s$ a \emph{leaf}, or \emph{terminal node}. 
We denote by $\nonterminals\coloneqq\cset{s\in\nodes}{\children{s}\neq\emptyset}$ the set of all non-terminal nodes.
\par
For nodes $s$ and $t$, we write $s\precedes t$ if \emph{$s$ precedes
  $t$}, i.e., if there is a directed segment (sequence of directed edges) in the graph from $s$ to
$t$. If $s\precedes t$ and $s\neq t$, we say that $s\sprecedes t$.
%The relation $\precedes$ is a special partial order on the set $\nodes$.
For any node $s$, we denote its set of \emph{descendants} by
$\descendants{s}\coloneqq\cset{t\in\nodes}{s\sprecedes t}$
and its set of \emph{non-parent
non-descendants} is given by $\nonparnondes{s}\coloneqq\nodes\setminus(\parents{s}\cup\{s\}\cup\descendants{s})$.

\par
\subsection{Variables and gambles on them.}
With each node $s$ of the tree, there is associated a variable $\var{s}$ assuming values in a non-empty finite set $\values{s}$. 
We denote  by $\gambles{s}$ the set of all gambles on $\values{s}$. 
We extend this notation to more complicated situations as follows. 
If $S$ is any subset of $\nodes$, then we denote by $\var{S}$ the tuple of variables whose components are the $\var{s}$ for all $s\in S$. 
This new joint variable assumes values in the finite set
$\values{S}\coloneqq\times_{s\in S}\values{s}$ and the corresponding
set of gambles is denoted by $\gambles{S}$. When $S=\emptyset$, we let
$\values{\emptyset}$ be a singleton. The
corresponding variable $\var{\emptyset}$ can then only assume this
single value, so there is no uncertainty about it. $\gambles{\emptyset}$ can then be identified with the set $\reals$ of real numbers.
%\footnote{When $S=\emptyset$, we let $\var{\emptyset}$ be a certain
%variable that can only assume a single value, given by   $\values{\emptyset}$. $\gambles{\emptyset}$ can then be identified with the set $\reals$ of real numbers.} 
%\footnote{\label{fn:empty-product} For any subset $S$ of $\nodes$, $\values{S}$ is defined formally as the set of all maps $\xval{S}$ of $S$ to $\bigcup_{s\in S}\values{s}$, such that $\xval{S}(s)=\xval{s}\in\values{s}$ for all $s\in S$. So when $S=\emptyset$, the empty product $\values{\emptyset}$ is defined as the set of all maps from $\emptyset$ to $\emptyset$, which is a singleton. The corresponding variable $\var{\emptyset}$ can then only assume this single value, so there is no uncertainty about it. $\gambles{\emptyset}$ can be identified with the set $\reals$ of real numbers.} 
Generic elements of $\values{s}$ are denoted by $\xval{s}$ or
$\zval{s}$ and similarly for $\xval{S}$ and $\zval{S}$ in $\values{S}$. 
Also, if we mention a tuple $\zval{S}$, then for any $t\in S$, the corresponding element in the tuple will be denoted by $\zval{t}$. 
We assume all variables in the network to be logically independent, meaning that the variable $\var{S}$ may assume \emph{all} values in $\values{S}$, for all $\emptyset\subseteq S\subseteq\nodes$. 
\par
We will frequently use the simplifying device of identifying a gamble $f_S$ on $\values{S}$ with its \emph{cylindrical extension} to $\values{U}$, where $S\subseteq U\subseteq\nodes$. 
This is the gamble $f_U$ on $\values{U}$ defined by $f_U(\xval{U})\coloneqq f_S(\xval{S})$ for all $\xval{U}\in\values{U}$. 
To give an example, if $\mathcal{K}\subseteq\gambles{\nodes}$, this trick allows us to consider $\mathcal{K}\cap\gambles{S}$ as the set of those gambles in $\mathcal{K}$ that depend only on the variable $\var{S}$. 
As another example, this device allows us to identify the gambles $\ind{\xsing{S}}$ and $\ind{\xsing{S}\times\values{\nodes\setminus S}}$, and therefore also the events $\xsing{S}$ and $\xsing{S}\times\values{\nodes\setminus S}$. 
More generally, for any event $A\subseteq\values{S}$, we can identify the gambles $\ind{A}$ and $\ind{A\times\values{\nodes\setminus S}}$, and therefore also the events $A$ and $A\times\values{\nodes\setminus S}$.
\par
\subsection{Modelling our beliefs about the network.}
\label{sec:beliefmodelling}
Throughout the paper, we consider sets of desirable
gambles as models for a subject's beliefs about the values that
certain variables in the network may assume. One of the main
contributions of this paper, further on in Section~\ref{sec:joint}, will be to show
how to construct a joint model for our network, being a coherent set
$\D_{\nodes}$ of desirable gambles on $\values{\nodes}$.

From such a joint model, one can derive both conditional and marginal models.
Let us start by explaining
how to condition the global model $\D_\nodes$. Consider a subset $I$ of $\nodes$
and assume we want to update the model $\D_{\nodes}$ with the
information that $\var{I}=\xval{I}$. This leads to the following updated set of desirable
gambles:
\begin{equation*}
\D_{\nodes}\rfloor\xval{I}\coloneqq\cset[\big]{f\in\gambles{\nodes\setminus
    I}}{\ind{\xsing{I}}f\in\D_{\nodes}},
\end{equation*}
which represents our subject's beliefs
about the value of the variable $\var{\nodes\setminus I}$, conditional on the
observation that $\var{I}$ assumes the value $\xval{I}$.
This definition is very intuitive, since $\ind{\xsing{I}}f$
is the unique gamble that is called off (is equal to zero) if $\var{I}\neq\xval{I}$ and equal to
$f$ if $\var{I}=\xval{I}$.
Notice that since $\ind{\xsing{\emptyset}}=1$, the special case of
conditioning on the certain variable $\var{\emptyset}$ does not yield
any problems. As
wanted, it amounts to not conditioning at all. 

Marginalisation is also very intuitive in the language of sets of
desirable gambles. Suppose we want to derive a marginal model for our
subject's beliefs about the variable $\var{O}$, where $O$ is some
subset of $\nodes$. This can be done by using the set of desirable
gambles that belong to $\D_{\nodes}$ but only depend on the variable
$\var{O}$:
\begin{equation*}
\marg_{O}(\D_\nodes)\coloneqq\cset[\big]{f\in\gambles{O}}{f\in\D_\nodes}.
\end{equation*}

Now let $I$ and $O$ be \emph{disjoint}  subsets of $\nodes$ and let
$\xval{I}$ be any element of $\values{I}$. By sequentially applying
the process of conditioning and marginalisation we can obtain
conditional marginal models for our subject's beliefs about the value of the variable $\var{O}$, conditional on the observation that $\var{I}$ assumes the value $\xval{I}$:
\begin{equation}\label{eq:condmarg}
\marg_{O}(\D_{\nodes}\rfloor\xval{I})=\cset[\big]{f\in\gambles{O}}{\ind{\xsing{I}}f\in\D_\nodes}.
\end{equation}

Since coherence is trivially preserved under both conditioning and
marginalisation, we find that if the joint model $\D_\nodes$ is
coherent, all the derived models will also
be coherent. 

Conditional and/or marginal models do not necessarily have to be
derived from a joint model, they can instead also be given as seperate
models on their own. In that case we will generecally denote them as
$\D_{O\rfloor\xval{I}}$. The special case of an unconditional marginal
model is sometimes denoted as $\D_O$ but we will also use the general notation above by letting
$I=\emptyset$ in the general notation above.

\par

\subsection{Epistemic irrelevance.}
\label{sec:irrelevance}
We now have the necessary tools to introduce one of the most important
concepts for this paper, that of epistemic irrelevance. We describe the case of conditional irrelevance, as we will show that the unconditional
version of epistemic irrelevance can easily be recovered as a special
case.
%\footnote{By now, it should be clear that it suffices, in the discussion below, to let $C=\emptyset$. This makes sure the variable $\var{C}$ has only one possible value, so conditioning on that variable amounts to not conditioning at all.}
\par
Consider three disjoint subsets $C$, $I$, and $O$ of $\nodes$. %, where both $I$ and $O$ are non-empty.
When a subject judges $\var{I}$ to be \emph{epistemically irrelevant to $\var{O}$ conditional on $\var{C}$}, he assumes that if he knows the value of $\var{C}$, then learning in addition which value $\var{I}$ assumes in $\values{I}$ will not affect his beliefs about $\var{O}$. 
More formally, assume that a subject has for every
$\xval{C}\in\values{C}$ a coherent conditional set of desirable
gambles $\D_{O\rfloor\xval{C}}$ on $\values{O}$.
If he assesses $\var{I}$ to be epistemically irrelevant to $\var{O}$
conditional on $\var{C}$, this implies that he can infer from these
models $\D_{O\rfloor\xval{C}}$ the following additional conditional models
$\D_{O\rfloor\xval{C\cup I}}$ on $\values{O}$:
\begin{equation*}
  \D_{O\rfloor\xval{C\cup I}}=\D_{O\rfloor\xval{C}}
  \text{ for all $\xval{C\cup I}\in\values{C\cup I}$.}
\end{equation*}
By now, it should be clear that it suffices for the unconditional case, in the discussion above, to let $C=\emptyset$. This makes sure the variable $\var{C}$ has only one possible value, so conditioning on that variable amounts to not conditioning at all.

\subsection{Local uncertainty models.}
\label{sec:local}
We now add \emph{local uncertainty models} to each of the nodes $s$ in
our network. These local models are assumed to be given beforehand and
will be used further on in Section~\ref{sec:joint} as basic building blocks
to construct a joint model for a given network.

If $s$ is not a root node of the network, i.e.~has a non-empty set of parents
$\parents{s}$, then we have a conditional local model for every
instantiation of its parents. For each $\xval{\parents{s}}\in\values{\parents{s}}$, we have a conditional coherent set $\D_{s\rfloor\xval{\parents{s}}}$ of
desirable gambles on $\values{s}$. It represents our subject's beliefs
about the variable $\var{s}$ conditional on the information
that its parents $\var{\parents{s}}$ assume the value $\xval{\parents{s}}$.

If $s$ is one of the root nodes, i.e.~has no parents, then our subject's local
beliefs about the variable $\var{s}$ are represented by an
unconditional local model. It should be a coherent set of desirable
gambles and will be denoted by $\D_{s}$.
As was explained in Section~\ref{sec:beliefmodelling}, we can also use the common generic notation $\D_{s\rfloor\xval{\parents{s}}}$ in
this unconditional case, since for a root node $s$, its set of parents
$\parents{s}$ is
equal to the empy set $\emptyset$.

\subsection{The interpretation of the graphical model.}\label{sec:graphical:interpretation}
In classical Bayesian nets, the graphical structure is taken to
represent the following assessments: for any node $s$, conditional on
its parent variables, the associated variable %$\var{s}$ 
is independent
of its non-parent non-descendant variables%$\var{\nonparnondes{s}}$
.

When generalising this interpretation to imprecise graphical
networks, the classical notion of
independence gets replaced by a more general, imprecise notion of
independence that is usually chosen to be strong independence. In this
paper we will not do so, we choose to use the weaker, assymetric notion of epistemic irrelevance
instead, which was introduced earlier on in
Section~\ref{sec:irrelevance}. In the special case of precise
uncertainty models, both epistemic irrelevance and strong independence
will reduce to the usual classical notion of independence and the
corresponding interpretations of the graphical network are equivalent
with the one used in a classical Bayesian network.

\par
In the present context, we assume that the graphical structure of the
network embodies the following conditional irrelevance assessments,
turning the network into a \emph{credal net under epistemic irrelevance}.
Consider any node $s$ in the network, its set of parents $\parents{s}$
and its set of non-parent non-descendants $\nonparnondes{s}$. 
Then \emph{conditional on its parent variables $\var{\parents{s}}$,
  the non-parent non-descendant variables $\var{\nonparnondes{s}}$ are
  assumed to be epistemically irrelevant to the variable $\var{s}$
  associated with the node $s$.} 

For a coherent set of desirable gambles $\D_\nodes$ that describes our
subject's global beliefs about all the variables in the network, this interpretation has the following consequences.
It can easily be seen from Sections~\ref{sec:beliefmodelling}
and~\ref{sec:irrelevance} that it implies for all $s\in\nodes$ and all
subsets $I$ of $\nonparnondes{s}$ that
\begin{equation}\label{eq:irrelevance}
  \marg_{s}(\D_{\nodes}\rfloor\xval{\parents{s}\cup I})=\marg_{s}(\D_{\nodes}\rfloor\xval{\parents{s}})
  \text{ for all $\xval{\parents{s}\cup I}\in\values{\parents{s}\cup I}$.}
\end{equation}

\par

\section{Constructing the most conservative joint}
\label{sec:joint}
Let us now show how to construct a global model for the variables in
the network, and argue that it is the most conservative coherent model
that extends the local models and expresses all conditional
irrelevancies encoded in the network.
But before we do so, let us provide some motivation. Suppose we have a
global set of desirable gambles $\D_\nodes$, how do we express that
such a model is compatible with the assessments encoded in the network? 
\subsection{Defining properties of the joint.}
We will require our joint model to satisfy the following four properties.
First of all, we require that our global model extends the local
ones. This means that the local models derived from the
global one should be equal to the given local models:
\begin{enumerate}[label=\upshape G1.,ref=\upshape G1]
\item\label{item:global:local}
  For each node $s$ in $\nodes$,
  $\marg_{s}(\D_{\nodes}\rfloor\xval{\parents{s}})=\D_{s\rfloor\xval{\parents{s}}}$
  for all $\xval{\parents{s}}\in\values{\parents{s}}$.
\end{enumerate}
The second requirement is that our model reflects all epistemic
irrelevancies encoded in the graphical structure of the network:
\begin{enumerate}[label=\upshape G2.,ref=\upshape G2]
\item\label{item:global:irrelevance} $\D_\nodes$ satisfies all
  equalities that are imposed by Eq.~\eqref{eq:irrelevance}. In these
  equalities, the right hand side can be replaced by
  $\D_{s\rfloor\xval{\parents{s}}}$ due to requirement \ref{item:global:local}.
\end{enumerate}
The third requirement is that our model satisfies the rationality requirement of coherence:
\begin{enumerate}[label=\upshape G3.,ref=\upshape G3]
\addtocounter{enumi}{1} % to account for previous requirements
\item\label{item:global:coherence}
$\D_\nodes$ is coherent (satisfies requirements \ref{item:Dnonzero}--\ref{item:Dcmb}).
\end{enumerate}
Since requirements
\ref{item:global:local}--\ref{item:global:coherence} do not uniquely
determine a global model, there is also a final requirement, which guarantees that all inferences we make on the basis of our global models are as conservative as possible, and are therefore based on no other considerations than what is encoded in the tree:
\begin{enumerate}[label=\upshape G4.,ref=\upshape G4]
\addtocounter{enumi}{3} % to account for previous requirements
\item\label{item:global:smallest}
$\D_\nodes$ is the smallest set of desirable gambles on
$\values{\nodes}$ satisfying requirements
\ref{item:global:local}--\ref{item:global:coherence}: it is a subset
of any other set that satisfies them.
\end{enumerate}
We will now show how to construct the unique global model $\D_\nodes$ that satisfies
all off the four requirements
\ref{item:global:local}--\ref{item:global:smallest} that were given above.

\subsection{Constructing the joint.}

Let us start by looking at a single given marginal model $\D_{s\rfloor\xval{\parents{s}}}$ and
investigate some of its implications for the joint model $\D_\nodes$. Consider
any node~$s$ in the network and fix values
$\xval{\parents{s}}$ for its parents. For the local model
$\D_{s\rfloor\xval{\parents{s}}}$, we now introduce a corresponding (non-coherent)
set $\A^{irr}_{s\rfloor\xval{\parents{s}}}$ of desirable gambles on
$\values{\nodes}$:
\begin{equation}\label{eq:localAirr}
\A^{irr}_{s\rfloor\xval{\parents{s}}}\coloneqq\cset[\big]{\ind{\xsing{\parents{s}\cup\nonparnondes{s}}}f}{\xval{\nonparnondes{s}}\in\values{\nonparnondes{s}},~f\in\D_{s\rfloor\xval{\parents{s}}}}.
\end{equation}
It shall become clear by the proposition below that such a set
$\A^{irr}_{s\rfloor\xval{\parents{s}}}$ indeed contains (some of the)
implications that follow from the local model
$\D_{s\rfloor\xval{\parents{s}}}$. Next, we bundle all these local
implications:
\begin{align}\label{eq:globalAirr}
\A^{irr}_\nodes
\coloneqq&\hspace{-10pt}\bigcup_{{s\in\nodes,\,\xval{\parents{s}}\in\values{\parents{s}}}}
\hspace{-10pt}\A^{irr}_{s\rfloor\xval{\parents{s}}}.
\end{align}
This results in a set $\A^{irr}_\nodes$ of desirable gambles on
$\values{\nodes}$ that will become essential further on for our construction of
the joint model $\D_\nodes$. The importance of this set $\A^{irr}_\nodes$
is already manifested by the following proposition, which is proven in
Appendix A (as are all other important results of this paper).
\begin{proposition}\label{prop:localirr}
Consider any $s\in\nodes$ and
$\xval{\parents{s}}\in\values{\parents{s}}$. Then the set
$\A^{irr}_{s\rfloor\xval{\parents{s}}}$ will be a subset of any joint model
$\D_\nodes$ satisfying requirements \ref{item:global:local} and
\ref{item:global:irrelevance}. As a consequence, their union
$\A^{irr}_\nodes$ will also be a subset of any joint model $\D_\nodes$
satisfying requirements \ref{item:global:local} and
\ref{item:global:irrelevance} and thus a subset of the unique joint
model that satisfies all four requirements \ref{item:global:local}--\ref{item:global:smallest}.
\end{proposition}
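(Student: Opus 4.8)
The plan is to verify the inclusion $\A^{irr}_{s\rfloor\xval{\parents{s}}}\subseteq\D_\nodes$ element by element, by unwinding the definitions of conditioning, marginalisation and epistemic irrelevance. Fix a node $s\in\nodes$, a value $\xval{\parents{s}}\in\values{\parents{s}}$, and let $\D_\nodes$ be any joint model satisfying \ref{item:global:local} and \ref{item:global:irrelevance}. We pick an arbitrary gamble in $\A^{irr}_{s\rfloor\xval{\parents{s}}}$; by~\eqref{eq:localAirr} it has the form $\ind{\xsing{\parents{s}\cup\nonparnondes{s}}}f$, where $\xval{\nonparnondes{s}}\in\values{\nonparnondes{s}}$ and $f\in\D_{s\rfloor\xval{\parents{s}}}$, and we write $\xval{\parents{s}\cup\nonparnondes{s}}$ for the joint value that combines the fixed $\xval{\parents{s}}$ with this $\xval{\nonparnondes{s}}$.

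The argument then proceeds in three short steps. First, by~\ref{item:global:local} we may rewrite $\D_{s\rfloor\xval{\parents{s}}}$ as $\marg_s(\D_\nodes\rfloor\xval{\parents{s}})$, so that $f\in\marg_s(\D_\nodes\rfloor\xval{\parents{s}})$. Second, we invoke the irrelevance equalities~\eqref{eq:irrelevance} imposed by~\ref{item:global:irrelevance} with index set $I=\nonparnondes{s}$, for the particular instantiation $\xval{\parents{s}\cup\nonparnondes{s}}$ (whose $\parents{s}$-component is the fixed $\xval{\parents{s}}$); this gives $\marg_s(\D_\nodes\rfloor\xval{\parents{s}\cup\nonparnondes{s}})=\marg_s(\D_\nodes\rfloor\xval{\parents{s}})$, hence $f\in\marg_s(\D_\nodes\rfloor\xval{\parents{s}\cup\nonparnondes{s}})$. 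Third, we apply the conditional-marginal formula~\eqref{eq:condmarg} with $O=\sing{s}$ and $I=\parents{s}\cup\nonparnondes{s}$, which are disjoint because $s\notin\parents{s}$ and $s\notin\nonparnondes{s}$ by the definition of $\nonparnondes{s}$; it states precisely that $f\in\marg_s(\D_\nodes\rfloor\xval{\parents{s}\cup\nonparnondes{s}})$ is equivalent to $\ind{\xsing{\parents{s}\cup\nonparnondes{s}}}f\in\D_\nodes$. Thus the chosen gamble lies in $\D_\nodes$, and since it was arbitrary we conclude $\A^{irr}_{s\rfloor\xval{\parents{s}}}\subseteq\D_\nodes$. (The degenerate case $\nonparnondes{s}=\emptyset$ is included: then $I=\emptyset$, the second step is trivial, and the conclusion already follows from~\ref{item:global:local}.)

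For the consequences, note that by~\eqref{eq:globalAirr} the set $\A^{irr}_\nodes$ is the union of the sets $\A^{irr}_{s\rfloor\xval{\parents{s}}}$ over all $s\in\nodes$ and all $\xval{\parents{s}}\in\values{\parents{s}}$; since each member of this union was just shown to lie inside every joint model satisfying \ref{item:global:local} and \ref{item:global:irrelevance}, so does the union. Finally, the unique joint model satisfying all of \ref{item:global:local}--\ref{item:global:smallest} in particular satisfies \ref{item:global:local} and \ref{item:global:irrelevance}, so $\A^{irr}_\nodes$ is a subset of it as well.

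We do not anticipate a genuine obstacle: the whole argument is a chain of definitional substitutions. The points that require some care are the bookkeeping of the conditioning variables --- in particular verifying the disjointness of $\sing{s}$ and $\parents{s}\cup\nonparnondes{s}$ so that~\eqref{eq:condmarg} is applicable --- and reading the irrelevance equality~\eqref{eq:irrelevance} at the right instantiation, namely the one whose $\parents{s}$-component matches the fixed $\xval{\parents{s}}$ that indexes $\A^{irr}_{s\rfloor\xval{\parents{s}}}$.
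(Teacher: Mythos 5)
Your proof is correct and follows essentially the same route as the paper's: it combines \ref{item:global:local} and the irrelevance equality~\eqref{eq:irrelevance} (with $I=\nonparnondes{s}$) to identify $\D_{s\rfloor\xval{\parents{s}}}$ with $\marg_s(\D_\nodes\rfloor\xval{\parents{s}\cup\nonparnondes{s}})$, and then applies~\eqref{eq:condmarg} to conclude that $\ind{\xsing{\parents{s}\cup\nonparnondes{s}}}f\in\D_\nodes$. You merely spell out the intermediate steps (and the disjointness check for~\eqref{eq:condmarg}) that the paper leaves implicit.
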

\noindent
We now propose the following expression for the joint model
$\D_\nodes$, describing our subject's beliefs about the variables in
the network and satisfying all four requirements \ref{item:global:local}--\ref{item:global:smallest}:
\begin{equation}\label{eq:joint}
\D_\nodes\coloneqq\posi(\A^{irr}_\nodes).
\end{equation}
\noindent
Since our eventual joint model $\D_\nodes$ should be coherent
(satisfy requirement \ref{item:global:coherence}), and thus in particular
should be a convex cone (satisfy properties \ref{item:Dscl} and \ref{item:Dcmb}), we know that
$\posi(\D_\nodes)$ should be equal to $\D_\nodes$. It is therefor
very intuitive to consider the set given above, since
$\posi(\posi(\D))=\posi(\D)$ for any set of desirable gambles $\D$.
On the other hand, it is not obvious that this set is indeed the
unique joint model $\D_\nodes$ satisfying all four requirements
\mbox{\ref{item:global:local}--\ref{item:global:smallest}}. Therefore, the next part of
this paper consists of three propositions that will lead to the
main theorem, which states that the joint model $\D_\nodes$ does satisfy all four requirements
\ref{item:global:local}--\ref{item:global:smallest}. We start by
showing that it contains all positive gambles.
\begin{proposition}\label{prop:global:positive}
$\gambles{\nodes}_{>0}$ is a subset of $\posi(\A^{irr}_\nodes)$. As a
consequence, we have that
\begin{equation*}
\posi(\A^{irr}_\nodes)=\posi\left(\A^{irr}_\nodes\cup\gambles{\nodes}_{>0}\right)=:\natex{\A^{irr}_\nodes}
\end{equation*}
\end{proposition}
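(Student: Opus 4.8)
The plan is to prove the stronger statement that the point indicator $\ind{\xsing{\nodes}}$ already belongs to $\A^{irr}_\nodes$ itself for every $\xval{\nodes}\in\values{\nodes}$, and then to obtain an arbitrary positive gamble as a finite strictly positive linear combination of such indicators.

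To that end, fix a node $s$ and a value $\xval{\parents{s}}\in\values{\parents{s}}$. For every $\xval{s}\in\values{s}$ the indicator $\ind{\xsing{s}}\in\gambles{s}$ is nonnegative and nonzero, hence strictly positive, so coherence property~\ref{item:Dapg} of the local model $\D_{s\rfloor\xval{\parents{s}}}$ gives $\ind{\xsing{s}}\in\D_{s\rfloor\xval{\parents{s}}}$. Plugging $f=\ind{\xsing{s}}$ into the definition~\eqref{eq:localAirr} and using that a product of indicators is the indicator of the intersection of the corresponding events, we get $\ind{\xsing{\parents{s}\cup\nonparnondes{s}}}\ind{\xsing{s}}=\ind{\xsing{\parents{s}\cup\nonparnondes{s}\cup\sing{s}}}\in\A^{irr}_{s\rfloor\xval{\parents{s}}}\subseteq\A^{irr}_\nodes$ for every $\xval{\nonparnondes{s}}\in\values{\nonparnondes{s}}$. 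Since $\parents{s}$, $\sing{s}$, $\descendants{s}$ and $\nonparnondes{s}$ partition $\nodes$ — this is exactly how $\nonparnondes{s}$ is defined, once acyclicity is invoked to rule out $\parents{s}\cap\descendants{s}\neq\emptyset$ — we have $\parents{s}\cup\nonparnondes{s}\cup\sing{s}=\nodes\setminus\descendants{s}$, and by logical independence the triple $(\xval{\parents{s}},\xval{\nonparnondes{s}},\xval{s})$ ranges over all of $\values{\nodes\setminus\descendants{s}}$. Hence $\ind{\xsing{\nodes\setminus\descendants{s}}}\in\A^{irr}_\nodes$ for every $s\in\nodes$ and every $\xval{\nodes\setminus\descendants{s}}\in\values{\nodes\setminus\descendants{s}}$. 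Applying this to a terminal node $t$ — which exists because $\nodes$ is finite and nonempty — for which $\descendants{t}=\emptyset$, yields $\ind{\xsing{\nodes}}\in\A^{irr}_\nodes$ for all $\xval{\nodes}\in\values{\nodes}$.

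Now take an arbitrary $g\in\gambles{\nodes}_{>0}$. Writing $g=\sum_{\xval{\nodes}\in\values{\nodes}}g(\xval{\nodes})\ind{\xsing{\nodes}}$ and discarding the terms with $g(\xval{\nodes})=0$ leaves a nonempty (because $g\neq 0$) finite sum, with strictly positive coefficients, of elements of $\A^{irr}_\nodes$, so $g\in\posi(\A^{irr}_\nodes)$ by~\eqref{eq:posi}; the small point that $g>0$ is not pointwise strict causes no trouble here. This gives $\gambles{\nodes}_{>0}\subseteq\posi(\A^{irr}_\nodes)$. For the displayed consequence, combine this with $\A^{irr}_\nodes\subseteq\posi(\A^{irr}_\nodes)$ to obtain $\A^{irr}_\nodes\cup\gambles{\nodes}_{>0}\subseteq\posi(\A^{irr}_\nodes)$; since $\posi$ is monotone and idempotent, $\posi(\A^{irr}_\nodes\cup\gambles{\nodes}_{>0})\subseteq\posi(\posi(\A^{irr}_\nodes))=\posi(\A^{irr}_\nodes)$, while the reverse inclusion is immediate from monotonicity, so the two sets coincide and equal the natural extension $\natex{\A^{irr}_\nodes}{}$. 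The only step requiring genuine thought is getting the point indicators $\ind{\xsing{\nodes}}$ into $\A^{irr}_\nodes$ via the partition structure together with a terminal node; everything after that is routine manipulation of $\posi$.
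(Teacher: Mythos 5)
Your proposal is correct and follows essentially the same route as the paper: both hinge on showing that every point indicator $\ind{\xsing{\nodes}}$ lies in $\A^{irr}_\nodes$ by taking $f=\ind{\xsing{s}}\in\D_{s\rfloor\xval{\parents{s}}}$ (via~\ref{item:Dapg}) at a terminal node $s$, where $\sing{s}\cup\parents{s}\cup\nonparnondes{s}=\nodes$, and then writing an arbitrary $g\in\gambles{\nodes}_{>0}$ as a finite strictly positive combination of such indicators. The only cosmetic difference is that you first establish $\ind{\xsing{\nodes\setminus\descendants{s}}}\in\A^{irr}_\nodes$ for a general node before specialising to a leaf, whereas the paper picks the leaf at the outset.
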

\noindent
This proposition serves as a first step towards the following
coherence result, which proves that our joint model satisfies
requirement \ref{item:global:coherence}.

\begin{proposition}\label{prop:global:coherence}
$\posi(\A^{irr}_\nodes)$ is a coherent set of desirable gambles on $\values{\nodes}$.
\end{proposition}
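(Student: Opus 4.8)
The plan is to verify the four coherence axioms \ref{item:Dnonzero}--\ref{item:Dcmb} for $\D_\nodes=\posi(\A^{irr}_\nodes)$ directly. Axioms \ref{item:Dscl} and \ref{item:Dcmb} are immediate: $\posi$ of any set is a convex cone, since it is defined as the set of strictly positive linear combinations of its argument, and $\posi(\posi(\D))=\posi(\D)$. Axiom \ref{item:Dapg} is exactly the content of Proposition~\ref{prop:global:positive}: $\gambles{\nodes}_{>0}\subseteq\posi(\A^{irr}_\nodes)$. So the whole burden of the proof is axiom \ref{item:Dnonzero}: no gamble $f\leq 0$ belongs to $\posi(\A^{irr}_\nodes)$; equivalently, $\gambles{\nodes}_{\leq 0}\cap\posi(\A^{irr}_\nodes)=\emptyset$. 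By Proposition~\ref{prop:global:positive} we may work with $\natex{\A^{irr}_\nodes}=\posi(\A^{irr}_\nodes\cup\gambles{\nodes}_{>0})$ instead, so it suffices to show that the assessment $\A^{irr}_\nodes$ avoids partial loss, i.e.\ that it can be extended to \emph{some} coherent set of desirable gambles; by the natural-extension result quoted in the excerpt, $\natex{\A^{irr}_\nodes}$ is then coherent.

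The natural way to exhibit such a coherent superset is to build a precise (probability-mass-function) joint that is compatible with all the local models and all the irrelevance assessments, and take the associated maximal set of desirable gambles. Concretely, for each node $s$ and each $\xval{\parents{s}}$, pick — using maximality of coherent sets of desirable gambles, as recalled in Section~\ref{sec:maximal} — a maximal coherent set $\M_{s\rfloor\xval{\parents{s}}}\supseteq\D_{s\rfloor\xval{\parents{s}}}$ on $\values{s}$, and let $\mass_s(\cdot\vert\xval{\parents{s}})$ be a mass function whose set of desirable gambles lies inside $\M_{s\rfloor\xval{\parents{s}}}$ (a strictly positive representative exists because $\M_{s\rfloor\xval{\parents{s}}}\supseteq\gambles{s}_{>0}$). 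Multiply these together along the DAG to obtain a strictly positive joint mass function $\mass_\nodes$ on $\values{\nodes}$, exactly as in an ordinary Bayesian net, and let $\M_\nodes$ be the (maximal, coherent) set of gambles with strictly positive expectation under $\mass_\nodes$. The claim is that $\A^{irr}_\nodes\subseteq\M_\nodes$, which gives $\natex{\A^{irr}_\nodes}\subseteq\M_\nodes$ and hence coherence of $\natex{\A^{irr}_\nodes}=\posi(\A^{irr}_\nodes)$.

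To check $\A^{irr}_\nodes\subseteq\M_\nodes$, take a generator $\ind{\xsing{\parents{s}\cup\nonparnondes{s}}}f$ with $f\in\D_{s\rfloor\xval{\parents{s}}}$. Its $\mass_\nodes$-expectation factors as the (positive) probability $\mass_\nodes(\xval{\parents{s}\cup\nonparnondes{s}})$ times the conditional expectation of $f$ given $\var{\parents{s}\cup\nonparnondes{s}}=\xval{\parents{s}\cup\nonparnondes{s}}$; because $\nonparnondes{s}$ consists of non-parent non-descendants of $s$, the usual d-separation / Markov argument for Bayesian nets shows this conditional expectation equals the conditional expectation of $f$ given only $\var{\parents{s}}=\xval{\parents{s}}$, which is strictly positive since $f\in\D_{s\rfloor\xval{\parents{s}}}\subseteq\M_{s\rfloor\xval{\parents{s}}}$ and the latter contains the desirable gambles of $\mass_s(\cdot\vert\xval{\parents{s}})$. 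Hence $\ind{\xsing{\parents{s}\cup\nonparnondes{s}}}f\in\M_\nodes$, as desired.

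The main obstacle is this last Markov-property step: it must be argued carefully that, for the particular product mass function $\mass_\nodes$ built from the DAG, conditioning $\var{s}$ on $\var{\parents{s}}$ together with any instantiation of a subset $I\subseteq\nonparnondes{s}$ gives the same conditional distribution for $\var{s}$ as conditioning on $\var{\parents{s}}$ alone — and that the relevant conditional probabilities are all strictly positive, so that no division-by-zero issue arises and the factorisation of the expectation is literally valid. Everything else (the cone axioms, invoking Proposition~\ref{prop:global:positive} for \ref{item:Dapg}, and invoking the natural-extension characterisation to pass from "has a coherent superset" to "is coherent") is routine.
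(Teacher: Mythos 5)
Your overall strategy --- reduce everything to axiom \ref{item:Dnonzero} and establish it by exhibiting a coherent superset of $\A^{irr}_\nodes$ built from a precise Bayesian net --- is sound in outline, but it breaks down at the step where you choose the local mass functions. You need, for each node $s$ and each $\xval{\parents{s}}$, one strictly positive mass function $\mass_s(\cdot\vert\xval{\parents{s}})$ under which \emph{every} gamble in $\D_{s\rfloor\xval{\parents{s}}}$ has strictly positive expectation. Such a mass function need not exist. Take $\values{s}=\{a,b\}$ and let $\D_{s\rfloor\xval{\parents{s}}}$ be the maximal coherent set $\cset{f}{f(a)>0}\cup\cset{f}{f(a)=0,\,f(b)>0}$ (local models are only required to be coherent, so this is admissible): it contains every $f_{\epsilon,M}$ with $f_{\epsilon,M}(a)=\epsilon>0$ and $f_{\epsilon,M}(b)=-M<0$, so any mass function giving all of these strictly positive expectation must put zero mass on $b$; but then the gamble $\ind{\{b\}}$, which also belongs to the set, has expectation zero. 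Your justification of this step is also logically reversed: from $f\in\M_{s\rfloor\xval{\parents{s}}}$ together with an inclusion of the form $\cset{g}{E(g)>0}\subseteq\M_{s\rfloor\xval{\parents{s}}}$ you can only conclude $E(f\vert\xval{\parents{s}})\geq 0$, not $>0$, and maximal sets do in general contain gambles of zero expectation under any representing mass function. The Markov-factorisation step you flag as the main obstacle is in fact unproblematic; this is the real obstacle.

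The paper's proof circumvents it by never asking for one mass function that dominates an entire (generally infinite) local model. Axiom \ref{item:Dnonzero} only has to be checked for one $f\in\posi(\A^{irr}_\nodes)$ at a time, and such an $f$ is a finite combination of generators, so only a \emph{finite} subset $\A^{f}_{s\rfloor\xval{\parents{s}}}$ of each local model is involved. Lemma~\ref{lemma:shpt}, a finite separating-hyperplane result, then supplies for each $s$ and $\xval{\parents{s}}$ a strictly positive mass function giving strictly positive expectation to just those finitely many gambles; these are assembled into a Bayesian net exactly as you propose, and $E_\nodes(f)>0$ contradicts the assumption $f\leq0$. If you wanted to keep your global-coherent-superset formulation, you would have to replace precise mass functions by objects rich enough to represent arbitrary maximal coherent sets (for instance lexicographic probability systems), which is a considerably heavier tool; working per gamble with finite assessments is precisely what makes the precise Bayesian-net construction suffice.
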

\noindent
The proof is given in Appendix A, but it contains an interesting
result that deserves to be pointed out. The crucial step of the proof
hinges on the assumption that if the local models of our network were
precise probability mass functions, we would be able to construct a
joint probability mass function that satisfies all irrelevancies (in that case
independecies) that are encoded in our network. Since the precise
version of a credal tree under epistemic irrelevance is a classical
Bayesian network, this assumption is indeed true. However, what is
nice about this approach, is that it can easily be extended to credal
networks with irrelevance assumptions that differ from the ones we
use, as long as the assumption above is satisfied. This enables us to
use existing coherence results for precise networks to proof their
counterparts for credal networks.

We now turn to an important proposition that will be essential to
prove that our joint model extends the local models and expresses all conditional
irrelevancies encoded in the network (satisfies requirements
\ref{item:global:local} and \ref{item:global:irrelevance}).
\begin{proposition}\label{prop:global:irrelevance}
Consider any $s\in\nodes$ and any subset $I$ of its non-parent non-descendants $\nonparnondes{s}$. If
we fix a value $\xval{\parents{s}\cup I}\in\values{\parents{s}\cup I}$,
then it holds for every $f\in\gambles{s}$ that
\begin{equation*}
\ind{\xsing{\parents{s}\cup I}}f\in\posi(\A^{irr}_\nodes)\Leftrightarrow f\in\D_{s\rfloor\xval{\parents{s}}}.
\end{equation*}
\end{proposition}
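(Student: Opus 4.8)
The plan is to prove the two implications separately, exploiting the explicit form $\posi(\A^{irr}_\nodes)$ of the joint.

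For the implication from right to left, suppose $f\in\D_{s\rfloor\xval{\parents{s}}}$. Take an arbitrary value $\xval{\nonparnondes{s}}\in\values{\nonparnondes{s}}$ whose restriction to $I$ (recall $I\subseteq\nonparnondes{s}$) agrees with the fixed $\xval{I}$; such a value exists because the variables are logically independent. Then by Eq.~\eqref{eq:localAirr} the gamble $\ind{\xsing{\parents{s}\cup\nonparnondes{s}}}f$ belongs to $\A^{irr}_{s\rfloor\xval{\parents{s}}}\subseteq\A^{irr}_\nodes$. Summing these gambles over all such extensions $\xval{\nonparnondes{s}}$ of $\xval{I}$ (finitely many, all with coefficient $1$) yields, by \ref{item:Dcmb}, an element of $\posi(\A^{irr}_\nodes)$; and since the indicators involved partition the event $\xsing{\parents{s}\cup I}$, this sum is exactly $\ind{\xsing{\parents{s}\cup I}}f$. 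Hence $\ind{\xsing{\parents{s}\cup I}}f\in\posi(\A^{irr}_\nodes)$.

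For the implication from left to right I would argue by contraposition, using Proposition~\ref{prop:global:coherence} to know $\posi(\A^{irr}_\nodes)$ is coherent, together with the conditioning machinery of Section~\ref{sec:beliefmodelling}. Assume $f\notin\D_{s\rfloor\xval{\parents{s}}}$. The natural route is to produce a coherent set $\D'_\nodes$ of desirable gambles on $\values{\nodes}$ that contains $\A^{irr}_\nodes$ but does \emph{not} contain $\ind{\xsing{\parents{s}\cup I}}f$: since $\posi(\A^{irr}_\nodes)$ is the smallest such set containing $\A^{irr}_\nodes$ and closed under $\posi$, and $\D'_\nodes$ is one such set, we would get $\posi(\A^{irr}_\nodes)\subseteq\D'_\nodes$ and thus $\ind{\xsing{\parents{s}\cup I}}f\notin\posi(\A^{irr}_\nodes)$. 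A candidate for $\D'_\nodes$ is obtained by taking, at each node $t$, a \emph{maximal} coherent set $\M_{t\rfloor\xval{\parents{t}}}$ extending the local model $\D_{t\rfloor\xval{\parents{t}}}$, chosen at node $s$ (for parent instantiation $\xval{\parents{s}}$) so that $f\notin\M_{s\rfloor\xval{\parents{s}}}$ — possible by the maximality property recalled in Section~\ref{sec:maximal}, since $f\notin\D_{s\rfloor\xval{\parents{s}}}$ and hence $f\notin\M$ for at least one maximal $\M\supseteq\D_{s\rfloor\xval{\parents{s}}}$. From these precise-like ingredients one builds a coherent joint $\D'_\nodes$ on $\values{\nodes}$ for the "maximal" credal net, in the same spirit as the construction underlying Proposition~\ref{prop:global:coherence}; by \ref{item:global:local} and \ref{item:global:irrelevance} for that joint (which one must verify), $\marg_s(\D'_\nodes\rfloor\xval{\parents{s}\cup I})=\M_{s\rfloor\xval{\parents{s}}}$, so by the formula \eqref{eq:condmarg} for conditional marginals, $\ind{\xsing{\parents{s}\cup I}}f\notin\D'_\nodes$. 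Finally $\A^{irr}_\nodes\subseteq\D'_\nodes$ follows from Proposition~\ref{prop:localirr} applied to $\D'_\nodes$ (which satisfies \ref{item:global:local} and \ref{item:global:irrelevance}).

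The main obstacle is the left-to-right direction: it genuinely needs a separating coherent joint, and producing one requires knowing that maximal local models can be glued into a coherent global model satisfying the graphical irrelevancies — essentially the content of Proposition~\ref{prop:global:coherence} and the "precise networks give coherent joints" assumption it relies on. If the paper's proof of Proposition~\ref{prop:global:coherence} already isolates the statement that for \emph{any} choice of coherent (in particular maximal) local models the set $\posi(\A^{irr}_\nodes)$ is coherent and satisfies \ref{item:global:local}--\ref{item:global:irrelevance}, then one can simply take $\D'_\nodes=\posi(\A'^{irr}_\nodes)$ for the maximal local models and quote Propositions~\ref{prop:localirr}, \ref{prop:global:coherence}, and the right-to-left half already proved here; the rest is the bookkeeping with indicators and cylindrical extensions sketched above.
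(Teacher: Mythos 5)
Your right-to-left direction is correct and is essentially the paper's argument (the paper dismisses it as trivial; your explicit sum of the gambles $\ind{\xsing{\parents{s}\cup\nonparnondes{s}}}f$ over all extensions $\xval{\nonparnondes{s}}$ of $\xval{I}$ is exactly what makes it work). The left-to-right direction, however, has a genuine gap at the decisive step. You build a joint $\D'_\nodes=\posi(\A^{irr*}_\nodes)$ from maximal local models and then conclude $\ind{\xsing{\parents{s}\cup I}}f\notin\D'_\nodes$ from the equality $\marg_{s}(\D'_\nodes\rfloor\xval{\parents{s}\cup I})=\M^*_{s\rfloor\xval{\parents{s}}}$, flagged as ``which one must verify''. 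But that equality \emph{is} Proposition~\ref{prop:global:irrelevance} for the primed network: its nontrivial content is precisely the implication you are trying to establish, so invoking it (even for a different choice of local models) leaves the argument circular. None of the tools you list closes it: Proposition~\ref{prop:localirr} and the right-to-left half only give the inclusion $\M^*_{s\rfloor\xval{\parents{s}}}\subseteq\marg_{s}(\D'_\nodes\rfloor\xval{\parents{s}\cup I})$, which is the wrong direction for excluding $f$, and Proposition~\ref{prop:global:coherence} delivers only \ref{item:global:coherence}, not \ref{item:global:local} or \ref{item:global:irrelevance}.

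The missing idea --- and the reason the paper passes to a \emph{maximal} set in the first place --- is the second property recalled in Section~\ref{sec:maximal}: if $f\neq 0$ and $f\notin\M^*_{s\rfloor\xval{\parents{s}}}$, then $-f\in\M^*_{s\rfloor\xval{\parents{s}}}$. By your own right-to-left argument applied to the primed network, $\ind{\xsing{\parents{s}\cup I}}(-f)\in\posi(\A^{irr*}_\nodes)$. If $\ind{\xsing{\parents{s}\cup I}}f$ also belonged to $\posi(\A^{irr*}_\nodes)\supseteq\posi(\A^{irr}_\nodes)$, then by \ref{item:Dcmb} their sum, the zero gamble, would too, contradicting \ref{item:Dnonzero} together with the coherence of $\posi(\A^{irr*}_\nodes)$, which Proposition~\ref{prop:global:coherence} supplies verbatim since the maximal local models are still coherent. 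This excludes $\ind{\xsing{\parents{s}\cup I}}f$ without ever invoking \ref{item:global:local} or \ref{item:global:irrelevance} for the primed joint, and it is exactly how the paper argues. Two smaller points: the case $f=0$ must be handled separately ($\ind{\xsing{\parents{s}\cup I}}\cdot 0=0$ cannot lie in a coherent set), and it suffices to replace the local model at the single node $s$ and single instantiation $\xval{\parents{s}}$ --- replacing every local model by a maximal one, as you propose, also works but is unnecessary.
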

\noindent
We now have all necessary tools to formulate our most important result. It
is the main contribution of this paper and provides a justification
for the joint model $\D_\nodes$ that was proposed by Eq.~\eqref{eq:joint}.

\begin{theorem}\label{theo:global-model}
Consider any credal network under epistemic irrelevance with given
conditional marginal models $\D_{s\rfloor\xval{\parents{s}}}$, then $\D_\nodes=\posi(\A^{irr}_\nodes)$ is the unique set of desirable gambles on
$\values{\nodes}$ that satisfies all four requirements \ref{item:global:local}--\ref{item:global:smallest}.
\end{theorem}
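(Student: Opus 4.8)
The plan is to assemble the theorem from the four results already in place, with Proposition~\ref{prop:global:irrelevance} doing the bulk of the work and the rest being bookkeeping. Requirement~\ref{item:global:coherence} is immediate, since it is exactly the statement of Proposition~\ref{prop:global:coherence}. For the two requirements that concern local models and irrelevance, I would run a single computation: fix any $s\in\nodes$, any subset $I$ of $\nonparnondes{s}$ (the empty set included), and any $\xval{\parents{s}\cup I}\in\values{\parents{s}\cup I}$. By the conditional-marginal formula~\eqref{eq:condmarg} applied to $\D_\nodes=\posi(\A^{irr}_\nodes)$,
\begin{equation*}
\marg_{s}(\posi(\A^{irr}_\nodes)\rfloor\xval{\parents{s}\cup I})
=\cset{f\in\gambles{s}}{\ind{\xsing{\parents{s}\cup I}}f\in\posi(\A^{irr}_\nodes)},
\end{equation*}
and Proposition~\ref{prop:global:irrelevance} identifies the right-hand side with $\D_{s\rfloor\xval{\parents{s}}}$. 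Hence the conditional marginal of $\posi(\A^{irr}_\nodes)$ for $\var{s}$ given $\xval{\parents{s}\cup I}$ is $\D_{s\rfloor\xval{\parents{s}}}$, whatever $I$ and $\xval{\parents{s}\cup I}$ we chose.

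Requirement~\ref{item:global:local} is then the instance $I=\emptyset$ of this identity (so that $\parents{s}\cup I=\parents{s}$). Requirement~\ref{item:global:irrelevance} follows too: for $I\subseteq\nonparnondes{s}$ both sides of~\eqref{eq:irrelevance} are, by the computation above, equal to $\D_{s\rfloor\xval{\parents{s}}}$, so they are equal to each other. It remains to check requirement~\ref{item:global:smallest}. Let $\D'_\nodes$ be any set of desirable gambles on $\values{\nodes}$ satisfying \ref{item:global:local}--\ref{item:global:coherence}. Since it satisfies \ref{item:global:local} and \ref{item:global:irrelevance}, Proposition~\ref{prop:localirr} gives $\A^{irr}_\nodes\subseteq\D'_\nodes$; since it is coherent it is a convex cone, so $\posi(\D'_\nodes)=\D'_\nodes$, and applying $\posi$ to the inclusion (which is monotone) yields $\D_\nodes=\posi(\A^{irr}_\nodes)\subseteq\D'_\nodes$. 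That is requirement~\ref{item:global:smallest}. Uniqueness is then automatic: any two sets satisfying all four requirements are, by \ref{item:global:smallest}, each included in the other, hence equal to $\D_\nodes$.

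Almost everything here is routine once Propositions~\ref{prop:localirr}--\ref{prop:global:irrelevance} are granted, so I do not expect a genuine obstacle at this stage; the real difficulty has been pushed into those propositions (and, through Proposition~\ref{prop:global:coherence}, into the known coherence of precise Bayesian nets). The one point that wants a moment's care is purely bookkeeping: one must read Proposition~\ref{prop:global:irrelevance} with $I=\emptyset$ permitted, so that the \emph{same} statement delivers both requirement~\ref{item:global:local} and requirement~\ref{item:global:irrelevance}, and one must recall that $\ind{\xsing{\parents{s}\cup I}}f$ in that proposition is the cylindrical extension to $\values{\nodes}$ of a gamble built from $f\in\gambles{s}$, which is precisely the object that~\eqref{eq:condmarg} feeds back.
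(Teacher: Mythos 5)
Your proposal is correct and follows essentially the same route as the paper's own proof: requirements \ref{item:global:local} and \ref{item:global:irrelevance} via Eq.~\eqref{eq:condmarg} combined with Proposition~\ref{prop:global:irrelevance} (with $I=\emptyset$ covering \ref{item:global:local}), requirement \ref{item:global:coherence} from Proposition~\ref{prop:global:coherence}, and requirement \ref{item:global:smallest} from Proposition~\ref{prop:localirr} plus the fact that any coherent competitor is a convex cone. The only (harmless) difference is that you make the monotonicity-of-$\posi$ step and the final mutual-inclusion uniqueness argument fully explicit, which the paper leaves implicit.
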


\section{Conclusions}
\label{sec:conclusion}

This paper has presented a new approach to credal nets. We replaced the
commonly used notion of strong independence with the weaker notion of
epistemic irrelevance and expressed both our local models and the
eventual joint model in the language of sets of desirable
gambles. This has lead to an intuitive, easy expression for a joint
model, that is proven to be the most conservative coherent model
that extends the local models and expresses all conditional
irrelevancies encoded in the network.

% \section*{Acknowledgements}

% |---------------------------------------*** THANKS
% ***---------------------------------------|

% Research by De Cooman and Hermans was supported by Flemish BOF project~01107505 and SBO project~060043 of the IWT-Vlaanderen.
% Research by Antonucci and Zaffalon has been partially supported by the Swiss NSF grants n.~200020-116674/1 and n.~200020-121785/1. 
% This paper has benefited from discussions with Seraf\'{\i}n Moral, Fabio G.~Cozman and Cassio P.~de Campos, and from the generous comments provided by two anonymous referees.

% \bibliographystyle{plain}
% \bibliography{general,additional}

\appendix
\section{Proofs of important results}\label{sec:app}
In this Appendix, we 
%justify formulas~\eqref{eq:algorithm-off-backbone}
%and~\eqref{eq:algorithm-in-target}, and 
give proofs for
Propositions~\ref{prop:localirr},~\ref{prop:global:positive},~\ref{prop:global:coherence}
and~\ref{prop:global:irrelevance} and
Theorem~\ref{theo:global-model}.

\begin{proof}[Proof of Proposition~\ref{prop:localirr}]
The second part of this proposition is trivial and we thus only need to prove the first part. To do so, consider any $s\in\nodes$ and
$\xval{\parents{s}\cup\nonparnondes{s}}\in\values{\parents{s}\cup\nonparnondes{s}}$. As a consequence of requirements~\ref{item:global:local} and
\ref{item:global:irrelevance}, we see that $\marg_{s}(\D_{\nodes}\rfloor\xval{\parents{s}\cup\nonparnondes{s}})$ should be equal
to the given local model $\D_{s\rfloor\xval{\parents{s}}}$. If we now
apply Eq.~\eqref{eq:condmarg}, it follows immediately that
$\ind{\xsing{\parents{s}\cup\nonparnondes{s}}}f$ is an element of
$\D_\nodes$, thereby completing the proof.
\end{proof}

\begin{proof}[Proof of Proposition~\ref{prop:global:positive}]
%Let us start by showing that $\posi(\A^{irr}_\nodes)$ satisfies the
%second property. 
The essential step is to see that for any
$\xval{\nodes}\in\values{\nodes}$, the indicator function
$\ind{\xsing{\nodes}}$ is an element of $\A^{irr}_\nodes$. To prove
this, pick an arbitrary leaf $s\in\nodes$. This is possible
because a DAG with a finite amount of nodes always has at least one leaf. Since $s$ is a leaf, it has no
descendants and we therefore have that
$\nodes=s\cup\parents{s}\cup\nonparnondes{s}$. Due to the coherence of
the local models, and in particular property \ref{item:Dapg}, the
indicator function $\ind{\xsing{s}}$ is an element of
$\D_{s\rfloor\xval{\parents{s}}}$. We can now apply
Eqs.~\eqref{eq:localAirr} and \eqref{eq:globalAirr} to see that
$\ind{\xsing{\nodes}}=\ind{\xsing{s\,\cup\parents{s}\cup\nonparnondes{s}}}$
is an element of $\A^{irr}_{\nodes}$. 

Since every $f>0$ is a
finite strictly positive linear combination of the indicator
functions that were constructed above, it follows that $\posi(\A^{irr}_\nodes)$ does indeed contain all positive
gambles in $\gambles{\nodes}_{>0}$.
%satisfies property \ref{item:Dapg}.
As a consequence, we have that
$\posi(\A^{irr}_\nodes)=\posi(\A^{irr}_\nodes)\cup\gambles{\nodes}_{>0}$
and because $\posi(\posi(\D))=\posi(\D)$ for any set of desirable
gambles $\D$, we find that
$\posi(\A^{irr}_\nodes)=\posi\left(\posi(\A^{irr}_\nodes)\cup\gambles{\nodes}_{>0}\right)$. The
right hand side of this equality is trivially equal to
$\posi\left(\A^{irr}_\nodes\cup\gambles{\nodes}_{>0}\right)=:\natex{\A^{irr}_\nodes}$,
thereby completing the proof.
\end{proof}

Our proof of Proposition~\ref{prop:global:coherence} uses the
following convenient version of the separating hyperplane theorem. It
is proven in Ref.~\cite[Lemma 2]{cooman2011} and repeated here to make
the paper more self-contained.

\begin{lemma}\label{lemma:shpt}
Consider any finite subset $\A$ of $\gambles{}$. Then
$0\notin\natex{\A}\coloneqq\posi(\A\cup\gamblesposX)$ if and only if
there is some probability mass function $p$ such that
$\sum_{x\in\values{}} p(x)f(x)>0$ for all $f\in\A$ and $p(x)>0$ for all $x\in\values{}$.
\end{lemma}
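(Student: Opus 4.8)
The plan is to prove the two implications separately: the ``if'' direction is an immediate linearity check, while the ``only if'' direction is where the separating hyperplane theorem enters. The one genuine subtlety to keep in mind throughout is that $\gamblesposX$ is not a closed set, so I cannot hope to separate $0$ from the cone $\posi(\A\cup\gamblesposX)$ directly; instead I will replace the troublesome generating set $\gamblesposX$ by the finitely many singleton indicators and pass to a compact convex hull.

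For sufficiency, suppose such a $p$ exists and consider the linear functional $E_p(f):=\sum_{x\in\values{}}p(x)f(x)$. By hypothesis $E_p(f)>0$ for every $f\in\A$; and for every $g\in\gamblesposX$ we have $g\geq 0$ with $g\neq 0$, so since all $p(x)>0$ the quantity $E_p(g)$ is a sum of nonnegative terms at least one of which is strictly positive, whence $E_p(g)>0$. As $E_p$ is linear and strictly positive on the whole generating set $\A\cup\gamblesposX$, it stays strictly positive on every strictly positive linear combination of these generators, i.e.\ on all of $\posi(\A\cup\gamblesposX)$. Since $E_p(0)=0$, I conclude $0\notin\natex{\A}$.

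For necessity, assume $0\notin\posi(\A\cup\gamblesposX)$. Writing $n:=\abs{\values{}}$ and identifying gambles with vectors in $\reals^n$, I would introduce the polytope
\[
  \mathcal{C}:=\operatorname{conv}\bigl(\A\cup\cset{\ind{\{x\}}}{x\in\values{}}\bigr),
\]
the convex hull of the finite set obtained by adjoining the singleton indicators to $\A$; being a convex hull of finitely many points, it is compact. The first key step is to verify that $0\notin\mathcal{C}$: any representation of $0$ as a convex combination of these points has at least one strictly positive coefficient, and discarding the zero-weighted terms exhibits $0$ as a strictly positive linear combination of elements of $\A\cup\cset{\ind{\{x\}}}{x\in\values{}}$; since each $\ind{\{x\}}$ lies in $\gamblesposX$, this would give $0\in\posi(\A\cup\gamblesposX)$, contradicting the hypothesis. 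With $0\notin\mathcal{C}$ and $\mathcal{C}$ compact and convex, the strict separating hyperplane theorem supplies a vector $p\in\reals^n$ and a real $\alpha>0$ with $\langle p,c\rangle\geq\alpha$ for all $c\in\mathcal{C}$ (using $\langle p,0\rangle=0$). Evaluating at each vertex $\ind{\{x\}}\in\mathcal{C}$ gives $p(x)=\langle p,\ind{\{x\}}\rangle\geq\alpha>0$, so $p$ is strictly positive; evaluating at each $f\in\A\subseteq\mathcal{C}$ gives $\sum_{x}p(x)f(x)\geq\alpha>0$. I would finish by normalising, replacing $p$ with $p/\sum_{y}p(y)$, which is a genuine probability mass function that is still strictly positive and still makes every $f\in\A$ have strictly positive expectation.

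The hard part is precisely this necessity argument, and the obstacle is exactly the non-closedness of $\gamblesposX$: replacing it by the finitely many extreme indicators $\ind{\{x\}}$ and taking the compact convex hull $\mathcal{C}$ is what makes strict separation applicable, and it is the evaluation of the separating vector $p$ at those indicators that forces $p$ to be \emph{strictly} positive, which is the feature distinguishing this statement from a bare separating-hyperplane result.
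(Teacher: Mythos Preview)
Your proof is correct. Note, however, that the paper does not actually supply its own proof of this lemma: it merely cites Ref.~\cite[Lemma~2]{cooman2011} and restates the result for self-containedness. There is therefore no in-paper argument to compare against. Your approach---replacing the non-closed generating set $\gamblesposX$ by the finitely many singleton indicators $\ind{\{x\}}$, passing to the compact polytope $\mathcal{C}=\operatorname{conv}(\A\cup\{\ind{\{x\}}:x\in\values{}\})$, verifying $0\notin\mathcal{C}$ via the hypothesis, and then strictly separating---is the standard and natural way to prove such a statement, and your identification of the key difficulty (obtaining \emph{strict} positivity of $p$ from evaluation at the indicators) is spot on.
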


\begin{proof}[Proof of Proposition~\ref{prop:global:coherence}]
Proving that $\posi(\A^{irr}_\nodes)$ is coherent, means showing that
it satisfies the properties \ref{item:Dnonzero}--\ref{item:Dcmb}. Property \ref{item:Dapg} is a direct consequence
of Proposition~\ref{prop:global:positive} and the
properties \ref{item:Dscl} and \ref{item:Dcmb} are trivial since
$\posi(\A^{irr}_\nodes)$ is a convex cone due to the use of the
$\posi$ operator. We thus only need to prove the first property,
stating that any gamble $f\in\gambles{\nodes}$ for which $f\leq 0$ can
not be an element of $\posi(\A^{irr}_\nodes)$. 

So consider any
$f\in\posi(\A^{irr}_\nodes)$ and assume \emph{ex absurdo} that
$f\leq 0$. We will show that this leads to a contradiction.
Since $f$ is an element of $\posi(\A^{irr}_\nodes)$, it follows from
Eqs.~\eqref{eq:posi},~\eqref{eq:localAirr} and~\eqref{eq:globalAirr} that 
\begin{equation}\label{eq:f}
f=\sum_{s\in\nodes\;}
\hspace{-2pt}
\sum_{~~\xval{\parents{s}}\in\values{\parents{s}}}
\hspace{-2pt}
\sum_{~~\xval{\nonparnondes{s}}\in\values{\nonparnondes{s}}}
\hspace{-6pt}
\ind{\xsing{\parents{s}\cup\nonparnondes{s}}}
f_{s,\,\xval{\parents{s}},\,\xval{\nonparnondes{s}}},
\end{equation}
where every $f_{s,\,\xval{\parents{s}},\,\xval{\nonparnondes{s}}}$ is
an element of $\D_{s\rfloor\xval{\parents{s}}}\cup\{0\}$ and at least
one of them differs from zero. The only perhaps surprising fact about
the equation above, is that it does not contain any (strictly positive) scaling factors
$\lambda_{\,s,\,\xval{\parents{s}},\,\xval{\nonparnondes{s}}}$. The
reason why these factors can be omitted is that the local models $\D_{s\rfloor\xval{\parents{s}}}$ are coherent and thus invariant under
strictly positive linear scaling. Therefore, scaling a gamble
$f_{s,\,\xval{\parents{s}},\,\xval{\nonparnondes{s}}}\in\D_{s\rfloor\xval{\parents{s}}}$ with a strictly
positive factor
$\lambda_{\,s,\,\xval{\parents{s}},\,\xval{\nonparnondes{s}}}$ will
still yield a gamble in $\D_{s\rfloor\xval{\parents{s}}}$.

Next, for every $s\in\nodes$ and
$\xval{\parents{s}}\in\values{\parents{s}}$ we construct a finite
subset of the local model $\D_{s\rfloor\xval{\parents{s}}}$:
\begin{equation*}
\A^{f}_{s\rfloor\xval{\parents{s}}}
\coloneqq
\cset[\big]{f_{s,\,\xval{\parents{s}},\,\xval{\nonparnondes{s}}}}{\xval{\nonparnondes{s}}\in\values{\nonparnondes{s}}\text{
    and } f_{s,\,\xval{\parents{s}},\,\xval{\nonparnondes{s}}}\neq 0}.
\end{equation*}
Due to the coherence of $\D_{s\rfloor\xval{\parents{s}}}$, we have
that
$0\notin\natex{\A^{f}_{s\rfloor\xval{\parents{s}}}}\subseteq\natex{\D_{s\rfloor\xval{\parents{s}}}}=\D_{s\rfloor\xval{\parents{s}}}$
and we can therefore apply Lemma~\ref{lemma:shpt}. This gives us for
every $s\in\nodes$ and
$\xval{\parents{s}}\in\values{\parents{s}}$ a mass function
$p_s(\cdot\vert\xval{\parents{s}})$ on $\values{s}$ with expectation
operator $E_s(\cdot\vert\xval{\parents{s}})$ on $\gambles{s}$ such that
$p_s(x_s\vert\xval{\parents{s}})>0$ for all $x_s\in\values{s}$ and
$E_s(g\vert\xval{\parents{s}})>0$ for each
$g\in\A^{f}_{s\rfloor\xval{\parents{s}}}$.

The trick is now to create a Bayesian network that has the conditional
mass functions $p_s(\cdot\vert\xval{\parents{s}})$ as its local models
and has the same graphical
structure as our credal net under epistemic irrelevance. If we let $E_\nodes$ be the expectation
operator for this Bayesian net, we find that
\begin{align*}
E_{\nodes}(f)&=
\sum_{s\in\nodes\;}
\hspace{-2pt}
\sum_{~~\xval{\parents{s}}\in\values{\parents{s}}}
\hspace{-2pt}
\sum_{~~\xval{\nonparnondes{s}}\in\values{\nonparnondes{s}}}
\hspace{-6pt}
E_\nodes\big(\ind{\xsing{\parents{s}\cup\nonparnondes{s}}}
f_{s,\,\xval{\parents{s}},\,\xval{\nonparnondes{s}}}\big)\\
&=
\sum_{s\in\nodes\;}
\hspace{-2pt}
\sum_{~~\xval{\parents{s}}\in\values{\parents{s}}}
\hspace{-2pt}
\sum_{~~\xval{\nonparnondes{s}}\in\values{\nonparnondes{s}}}
\hspace{-6pt}
E_\nodes\big(\ind{\xsing{\parents{s}\cup\nonparnondes{s}}}\big)
E_\nodes\big(f_{s,\,\xval{\parents{s}},\,\xval{\nonparnondes{s}}}\big\vert\;\xval{\parents{s}}\big)\\
&=
\sum_{s\in\nodes\;}
\hspace{-2pt}
\sum_{~~\xval{\parents{s}}\in\values{\parents{s}}}
\hspace{-2pt}
\sum_{~~\xval{\nonparnondes{s}}\in\values{\nonparnondes{s}}}
\hspace{-6pt}
p_\nodes(\xval{\parents{s}\cup\nonparnondes{s}})
E_\nodes\big(f_{s,\,\xval{\parents{s}},\,\xval{\nonparnondes{s}}}\big\vert\;\xval{\parents{s}}\big)
,
\end{align*}
in which $p_\nodes$ is the global mass function of the Bayesian
net. Since all the local probabilities
$p_s(\cdot\vert\xval{\parents{s}})$ are strictly positive, this is
also true for the global ones and we find that
$p_\nodes(\xval{\parents{s}\cup\nonparnondes{s}})>0$. For the
conditional expectations
$E_\nodes(f_{s,\,\xval{\parents{s}},\,\xval{\nonparnondes{s}}}\vert\;\xval{\parents{s}})$
there are two possibilities. Either
$f_{s,\,\xval{\parents{s}},\,\xval{\nonparnondes{s}}}=0$, in which
case
$E_\nodes(f_{s,\,\xval{\parents{s}},\,\xval{\nonparnondes{s}}}\vert\;\xval{\parents{s}})=0$,
either
$f_{s,\,\xval{\parents{s}},\,\xval{\nonparnondes{s}}}\in\A^{f}_{s\rfloor\xval{\parents{s}}}$,
in which case
$E_\nodes(f_{s,\,\xval{\parents{s}},\,\xval{\nonparnondes{s}}}\vert\;\xval{\parents{s}})>0$. However,
since at least one of
the gambles $f_{s,\,\xval{\parents{s}},\,\xval{\nonparnondes{s}}}$ in
Eq.~\eqref{eq:f} has to differ from zero, it is not possible that
$E_\nodes(f_{s,\,\xval{\parents{s}},\,\xval{\nonparnondes{s}}}\vert\;\xval{\parents{s}})=0$
for all gambles
$f_{s,\,\xval{\parents{s}},\,\xval{\nonparnondes{s}}}$ and we
can conclude that $E_\nodes(f)>0$. If we now apply our assumption ex
absurdo that $f\leq 0$ and thus $E_\nodes(f)\leq 0$, this leads to a
contradiction and completes the proof.
\end{proof}

\begin{proof}[Proof of Proposition~\ref{prop:global:irrelevance}]
The reverse implication is trivial due to the way $\posi(\A^{irr}_\nodes)$ is
constructed; see Eqs.~\eqref{eq:posi},~\eqref{eq:localAirr} and~\eqref{eq:globalAirr}. It therefore suffices to
prove the direct implication.
Consider any $s\in\nodes$, any subset $I$ of its non-parent
non-descendants $\nonparnondes{s}$ and fix a value
$\xval{\parents{s}\cup I}\in\values{\parents{s}\cup I}$. We set out to
proof for every $f\in\gambles{s}$ that
$f\notin\D_{s\rfloor\xval{\parents{s}}}$ implies
$\ind{\xsing{\parents{s}\cup I}}f\notin\posi(\A^{irr}_\nodes)$.

The case $f=0$ is trivial because $\ind{\xsing{\parents{s}\cup I}}f$
is then equal to zero, which can not be an element of
$\posi(\A^{irr}_\nodes)$ 
due to its
coherence; see
Proposition~\ref{prop:global:coherence}. If $f\neq 0$, we start by
applying some of the properties of maximal coherent sets of desirable
gambles that were introduced in Section~\ref{sec:maximal}. Due to the
first property, we can infer from
$f\notin\D_{s\rfloor\xval{\parents{s}}}$ that there is at least one
maximal set of desirable gambles $\M^*_{s\rfloor\xval{\parents{s}}}\supseteq\D_{s\rfloor\xval{\parents{s}}}$
for which $f\notin\M^*_{s\rfloor\xval{\parents{s}}}$. Due to the second
property and the fact that $f\neq 0$, this in turn implies that $-f\in\M^*_{s\rfloor\xval{\parents{s}}}$.

We now denote by $\A^{irr*}_\nodes$ the set that is obtained by
Eq.~\eqref{eq:globalAirr} if we replace the local model
$\D_{s\rfloor\xval{\parents{s}}}$ by the specific maximal superset
$\M^*_{s\rfloor\xval{\parents{s}}}$ that was introduced above. It
should be clear that $\A^{irr*}_\nodes\supseteq\A^{irr}_\nodes$.
Next, since $-f\in\M^*_{s\rfloor\xval{\parents{s}}}$, it follows from the
construction of $\A^{irr*}_\nodes$ that $\ind{\xsing{\parents{s}\cup
    I}}(-f)\in\A^{irr*}_\nodes\subseteq\posi(\A^{irr*}_\nodes)$. The
proof can now be completed if we realise that $\ind{\xsing{\parents{s}\cup
    I}}f\notin\posi(\A^{irr*}_\nodes)$ because this would contradict
with its coherence and notice that it implies that
$\ind{\xsing{\parents{s}\cup I}}f\notin\posi(\A^{irr}_\nodes)$ because
$\A^{irr*}_\nodes\supseteq\A^{irr}_\nodes$.
\end{proof}

\begin{proof}[Proof of Theorem~\ref{theo:global-model}]

We start by proving that the joint model $\D_\nodes=\posi(\A^{irr}_\nodes)$ satisfies
requirements~\ref{item:global:local}
and~\ref{item:global:irrelevance}. To do so, consider any
$s\in\nodes$, $I\subseteq\nonparnondes{s}$ and $\xval{\parents{s}\cup
  I}\in\values{\parents{s}\cup I}$ and an arbitrary gamble \mbox{$h\in\gambles{s}$}. It can be seen from the following
chain of equivalences that
$\marg_{s}(\D_{\nodes}\rfloor\xval{\parents{s}\cup
  I})=\D_{s\rfloor\xval{\parents{s}}}$.
\begin{align*}
h\in\marg_{s}(\D_{\nodes}\rfloor\xval{\parents{s}\cup
  I})
\Leftrightarrow
h\in\marg_{s}(\posi(\A^{irr}_\nodes)\rfloor\xval{\parents{s}\cup
  I})
&\Leftrightarrow
\ind{\xsing{\parents{s}\cup I}}h\in\posi(\A^{irr}_\nodes)\\
&\Leftrightarrow
h\in\D_{s\rfloor\xval{\parents{s}}}.
\end{align*}
The second equivalence is a direct application of
Eq.~\ref{eq:condmarg} and the third one is due to
Proposition~\ref{prop:global:irrelevance}.
Requirement~\ref{item:global:local} is now proven by letting
$I=\emptyset$ and requirement~\ref{item:global:irrelevance} is fulfilled
because $\marg_{s}(\D_{\nodes}\rfloor\xval{\parents{s}\cup
  I})=\D_{s\rfloor\xval{\parents{s}}}=\marg_{s}(\D_{\nodes}\rfloor\xval{\parents{s}})$. The
next step is to show that the joint model
$\D_\nodes=\posi(\A^{irr}_\nodes)$ also satisfies requirements~\ref{item:global:coherence} and \ref{item:global:smallest}.

Requirement \ref{item:global:coherence} demands that $\D_\nodes=\posi(\A^{irr}_\nodes)$ is
coherent, but since this is proven in
Proposition~\ref{prop:global:coherence}, the only thing that is left
to prove is requirement \ref{item:global:smallest}. This final requirement demands that
$\D_\nodes=\posi(\A^{irr}_\nodes)$ is included in any set of
desirable gambles satisfying the
requirements~\ref{item:global:local}--\ref{item:global:coherence}.
This is easy to proof since we know from Proposition~\ref{prop:localirr} that $\A^{irr}_\nodes$ is
a subset of any joint model satisfying all four requirements. It then
follows from the coherence requirement \ref{item:global:coherence}
that $\D_\nodes=\posi(\A^{irr}_\nodes)$ is a subset of all joint
models satisfying~\ref{item:global:local}--\ref{item:global:coherence}
and thus the unique smallest model that also satisfies
requirement~\ref{item:global:smallest}.
\end{proof}

\end{document}